\documentclass[sigconf]{acmart}

\usepackage{booktabs} 
\usepackage{breakurl}
\usepackage{amsfonts}
\usepackage{amsmath}
\usepackage{amssymb}
\usepackage{todonotes}
\usepackage{balance}

\setcopyright{rightsretained}


\usepackage{color}

\newcommand{\cB}{\mathcal{B}}
\newcommand{\cG}{\mathcal{G}}
\newcommand{\cP}{\mathcal{P}}
\newcommand{\cR}{\mathcal{R}}
\newcommand{\cS}{\mathcal{S}}
\newcommand{\cX}{\mathcal{X}}
\newcommand{\lE}{\mathbb{E}}

\newcommand{\lP}{\mathbb{P}}

\newcommand{\lZ}{\mathbb{Z}}
\newcommand{\bw}{\overline{w}}
\newcommand{\bW}{\overline{W}}

\clubpenalty=10000
\widowpenalty = 10000

\newcommand{\popp}{\cP_p}
\newcommand{\popt}{\cP_t}


\copyrightyear{2018}
\acmYear{2018}
\setcopyright{acmlicensed}
\acmConference[WSDM 2018]{11th ACM International
Conf. on Web Search and Data Mining}{February 5--9, 2018}{Marina Del Rey, CA, USA}
\acmPrice{15.00}
\acmDOI{10.1145/3159652.3159687}
 \acmISBN{978-1-4503-5581-0/18/02} 
 
 \fancyhead{}
 
\begin{document}
\title{Offline A/B testing for Recommender Systems}

\author{Alexandre Gilotte, Cl\'ement Calauz\`enes, Thomas Nedelec, Alexandre Abraham, Simon Doll\'e}
\affiliation{%
  \institution{Criteo Research}
  \streetaddress{32 rue Blanche 75009 Paris, France}
}
\email{f.name@criteo.com}

\begin{abstract}
Online A/B testing evaluates the impact of a new technology by running it in a real production environment and testing its performance on a subset of the users of the platform. It is a well-known practice to run a preliminary offline evaluation on historical data to iterate faster on new ideas, and to detect poor policies in order to avoid losing money or breaking the system.
For such offline evaluations, we are interested in methods that can compute offline an estimate of the potential uplift of performance generated by a new technology. 
Offline performance can be measured using estimators known as \emph{counterfactual} or \emph{off-policy} estimators. 
Traditional counterfactual estimators, such as \emph{capped importance sampling} or \emph{normalised importance sampling},
exhibit unsatisfying bias-variance compromises when experimenting on personalized product recommendation systems.
To overcome this issue, we model the bias incurred by these estimators rather than bound it in the worst case, which leads us to propose a new counterfactual estimator.
We provide a benchmark of the different estimators showing their correlation with business metrics observed by running online A/B tests on a large-scale commercial recommender system.
\end{abstract}

%
%
\begin{CCSXML}
<ccs2012>
<concept>
<concept_id>10010147.10010257.10010282.10010292</concept_id>
<concept_desc>Computing methodologies~Learning from implicit feedback</concept_desc>
<concept_significance>500</concept_significance>
</concept>
<concept>
<concept_id>10002951.10003317.10003359</concept_id>
<concept_desc>Information systems~Evaluation of retrieval results</concept_desc>
<concept_significance>300</concept_significance>
</concept>
</ccs2012>
\end{CCSXML}

\ccsdesc[500]{Computing methodologies~Learning from implicit feedback}
\ccsdesc[300]{Information systems~Evaluation of retrieval results}


\keywords{counterfactual estimation, off-policy evaluation, recommender system, importance sampling.}
\maketitle

\section{Introduction}
\label{sec:introduction}
Personalized product recommendation has become a central part of most online marketing systems. Having efficient and reliable methods to evaluate recommender systems is critical in accelerating the pace of improvement of these marketing platforms.

Online A/B tests became ubiquitous in tech companies in order to make informed decisions on the rollout of a new technology such as a recommender system. Each new software implementation is tested by comparing its performance with the previous production version through randomised experiments. In practice, to compare two technologies, a pool of units (e.g. users, displays or servers) of the platform is split in two populations and each of them is exposed to one of the tested technologies. The careful choice of the unit reflects independence assumptions under which the test is run. At the end of the experiments, business metrics such as the generated revenue, the number of clicks or the time spent on the platform are compared to make a decision on the future of the new technology.

However, online A/B tests take time and cost money. Indeed, to gather a sufficient amount of data to reach statistical sufficiency and be able to study periodic behaviours (the signal can be different from one day to the other), an A/B test is usually implemented over several weeks. On top of this, prototypes need to be brought to production standard to be tested. These reasons prevent companies from iterating quickly on new ideas. 

To solve these pitfalls, people historically relied on offline experiments based on some rank-based metrics, such as NDCG \cite{jarvelin2000ir}, MAP \cite{baeza1999modern} or Precision@K. Such evaluations suffer from very heavy assumptions, such as independence between products or the fact that the feedback (e.g. click) can be translated into a supervised task \cite{herlocker2004evaluating, Marlin2007, pradel2012}. To overcome these limitations, some estimators were introduced \cite{bottou2013counterfactual, Li2011} to estimate offline -- i.e. using randomised historical data gathered under only one policy -- a business comparison between two systems. In the following, we shall call the procedure of comparing offline two systems based on some business metric defining the outcome an \emph{offline A/B test}.

This setting is called \emph{counterfactual reasoning} or \emph{off-policy evaluation} (OPE) (see \cite{bottou2013counterfactual} for a comprehensive study). Several estimators such as \emph{Basic Importance Sampling} (BIS, \cite{hammersley5monte, horvitz1952generalization}), \emph{Capped Importance Sampling} (CIS, \cite{bottou2013counterfactual}), \emph{Normalised Importance Sampling} (NIS, \cite{powell1966weighted}) and \emph{Doubly Robust} (DR, \cite{dudik2011doubly}) have been introduced to compute the expected reward of the tested technology $\pi_{t}$ based on logs collected on the current technology in production $\pi_{p}$. 

All theses estimators achieve different trade-offs between bias and variance. 
After explaining why BIS (Section \ref{sec:onlineofflineABtesting}) and DR (Section \ref{sec:controlVariates}) suffer from high variance in the recommendation setting, we shall focus on \emph{capped importance sampling} (Section \ref{sec:cappingWeights}). \cite{bottou2013counterfactual} proposed clipping the importance weights, which leads to a biased estimator with lower variance. However, the control of the bias is very loose in the general case and \cite{bottou2013counterfactual} only advocates modifying the source probability (the current system) to further explore whether a light clipping could be sufficient.

The main caveat of such biased estimates is that a low bias is present only under unrealistic conditions (see Section \ref{sec:cappingWeights}). Our main contribution is to propose two variants of \emph{capped importance sampling} that achieve low bias under much more realistic conditions (Section \ref{sec:stratification} and Section \ref{sec:apprTest}) and show their practical interest on real personalised recommendation systems. In Section \ref{sec:experiments}, we compare metrics observed during real online A/B tests with the pre-computed values of the different counterfactual estimators. 

\section{Settting and notation}

We consider recommender systems in the context of online product recommendation .
The task consists of displaying a set of products to a user on some e-commerce websites or on some advertising banners. These subsets of products should be personalised based on the interests of the user. 
This task is formalised as a ranking task, and not only a top-K retrieval, because the different product slots are not equivalent and exhibit different performance \cite{craswell2008}. The system outputs a ranked list of products and then maps better products to better positions. 

A recommendation policy is designed as a distribution over the top-K rankings. Good examples of distributions are the Bradley-Terry Luce (BTL) model \cite{bradley1952}, the Placket-Luce model \cite{plackett,guiver2009bayesian, cheng2010label} or more generally Thurstonian models \cite{thurstone, yellott1977}.

In the following, we will represent random variables with capital letters such as Y and realisation of random variables with lower-case letters such as y. Given a display $x$ represented by a set of contextual features as well as a set of eligible products, the recommender system outputs a probability distribution $\pi(A|X)$ where $a$ is a top-$K$ ranking on the eligible products and $K$ is the number of items that will be displayed. Taking action a in state x generates a reward $r \in [0,r_{\max}]$ that could be interpreted as a click or a purchase.

\section{Online and Offline A/B Testing}
\label{sec:onlineofflineABtesting}
In \emph{online} A/B tests, the objective is to compare two systems \emph{prod} and \emph{test} to ultimately take a decision on which one performs better than the other based on their respective business value. We consider a set of $n$ units $x$ that are randomly assigned to either \emph{prod} or \emph{test} and seek to measure the average difference in value, based on the reward signal $r \in [0,r_{\max}]$ that could be the number of clicks or the generated revenue. The choice of the units, which could be internet users, recommendation opportunities or servers, heavily depends on the independence assumptions made in order to reach a statistically significant decision in a timely manner. This assumption is called the isolation assumption \cite{bottou2013counterfactual}.

We will note the current production policy $\pi_p$ and the test policy $\pi_t$.
To compare $\pi_p$ and $\pi_t$, we estimate the average difference of value $\Delta \cR$ which is called average treatment effect and defined as
\begin{align*}
\Delta \cR(\pi_p, \pi_t) &= \lE_{\pi_t}[R] - \lE_{\pi_p}[R]
\end{align*}
where $\lE_{\pi_p}[R] = \lE[R | A] \pi_p(A | X) \lP(X)$.
During an online A/B test, units are randomly split in two populations $\popt$ and $\popp$, such that we can estimate $\Delta \cR$ using
\begin{align*}
\Delta \cR(\pi_p, \pi_t) &=\lE[R | X\in \popt] - \lE[R | X\in \popp]\,.
\end{align*}
$\Delta \cR$ is estimated by Monte-Carlo using the two datasets collected during the test $\cS_p = \{(x_i, a_i, r_i) : i \in \popp\}$ and  $\cS_t = \{(x_i, a_i, r_i) : i \in \popt\}$. We build the empirical estimator $\Delta \hat \cR$ to take the decision by performing a statistical test : 
\begin{align*}
\Delta \hat{\cR} (\pi_p, \pi_t) &= \hat \cR(\cS_t) - \hat \cR (\cS_p)
\end{align*}
where $\hat \cR(\cS)$ is the empirical average of rewards over $\cS$ gathered during the \textit{online} AB test. 

To perform an \emph{offline} A/B test, we have only one set of $n$ historical i.i.d. samples $\cS_n = \{(x_i, a_i, r_i) : i \in [n]\}$ collected using a production recommender system $\pi_{p}$ (also known as the \emph{behaviour policy} in the RL community or \emph{logging policy}).
The goal is to compare the performance of a new technology, a test policy denoted $\pi_t$, to our current system $\pi_p$ \footnote{We need a stochastic policy for $\pi_p$ that puts a non-zero probability on any ranking $a$ to prevent spurious correlations from biasing the estimators we exhibit in the following sections (see \cite{bottou2013counterfactual} for details).}. 
We can directly estimate $\lE_{\pi_p}[R]$ using $\hat \cR(\cS_n)$, but for $\lE_{\pi_t}[R]$ we cannot use a direct estimation since we do not have any data gathered under $\pi_t$.
One of the main tools to estimate the expected reward under the target policy using rewards gathered under the behaviour policy is \textit{importance sampling} or \textit{inverse propensity score} as introduced by \citet{hammersley5monte} which leads to the following Monte-Carlo estimator:
\begin{align*}
\hat{\cR}^{\rm IS}(\pi_{t}) = \frac{1}{n} \sum_{(x,a,r) \in \cS_n} \!\!\!\!w(a,x) r \,\,\,\,\,\,\,\,\,\,\,\,~\text{where}~ w(a, x) = \frac{\pi_{t}(a|x)}{\pi_{p}(a|x)}
\end{align*}

The main advantage of such an estimator that it is unbiased, while its main pitfall is usually its high variance, which depends on how different $\pi_{t}$ is from $\pi_{p}$ (this variance is unbounded).
All the difficulty here resides in the size of the action space. 
The number of top-$K$ rankings over candidate sets of size $M$ is extremely high ($K! \binom{M}{K}$), resulting in high variance of the importance weights $W$.

Several variants of importance sampling have been proposed, with different purposes, to tackle the high variance that can arise from the use of importance sampling. They use some classical variance reduction techniques, such as difference control variate or ratio control variate. However these approaches consist of using unbiased or consistent estimators and turn out to still lead to estimators suffering from high variance.
 An approach to trade off variance for bias is to clip the importance weights as proposed in \cite{bottou2013counterfactual}.
In the next section, we detail several classic methods used in importance sampling for counterfactual reasoning. 
\section{Reducing Estimators Variance}
\label{sec:literatureReview} 
\subsection{Control Variates} 
\label{sec:controlVariates}
Control variates are a popular variance reduction method in statistics. It consists of finding a second random variable with known expectation and which is correlated with the variable to estimate in order to reduce the variance of its estimation.

\paragraph{Doubly robust estimator}
The easiest case is when we dispose of external knowledge like a reward model. We can use this as a control variate to improve our current estimator \cite{dudik2011doubly}. 
We assume we can access a model $\bar{r}(a,x)$ that estimates the expected reward of each action $a$ at context $x$. We define the \emph{doubly robust} estimator:
\begin{align*}
\hat{\cR}^{\rm DR}(\pi_{t}) &=  \sum_{(x,a,r) \in \cS_n} \bigg( \left(r - \bar r(a,x)\right) w(a,x) + \lE_{\pi_t}\left[\bar{r}(A,X) \middle| X=x\right]\bigg).
\end{align*} 
This estimator is unbiased and, if the predicted reward $\bar{r}(A,X)$ is well correlated with the actual reward $R$, it has lower variance than IS (see \cite[\S8.9]{owen2010Monte} for instance).

However, this estimator has several drawbacks in the setting of recommendation systems. 
First, having an accurate model of the reward given the action is challenging when the number of possible actions is very large.
To overcome this problem, \citet{williams1992} -- in the context of reinforcement learning -- proposed to use a model $\bar{r}$ not dependent on the action $a$. It has a higher variance than the initial DR model but avoid the marginalization over all the actions.

However, this approach does not solve the second and biggest drawback of this method: when the reward has a high variance even conditionally to $X$ and $A$, the predicted reward cannot have a strong correlation with it. 
For instance, when the computed metric is the number of clicks -- i.e. the reward $R$ follows a Bernoulli with parameter close to 0 -- the actual expected reward per action (typically around $10^{-3}$ in display advertising for instance) hardly correlates with $R$ ($0$ or $1$). 
In this particular case, DR is very close to IS: the use of the click model does not help to reduce the variance. 
Even if no good model is available, another control variate can be implemented to decrease the variance of IS. 

\paragraph{Normalised importance sampling}
We know that $\mathbb{E}_{\pi_{p}}[W] = 1$. Using the empirical average $\frac{1}{n}\sum_{(x,a,r) \in \cS_n} w(a,x)$ as a global ratio control variate, we have the normalized importance sampling (NIS) estimator \cite{powell1966weighted, Swaminathana}:
\begin{align*} 
\hat{\cR}^{\rm NIS}(\pi_{t}) = \frac{1}{\sum_{(x,a,r) \in \cS_n} w(a,x)} \sum_{(x, a, r) \in \cS_n} w(a,x) r
\end{align*} 
The normalizing constant is equal to the sum of the importance weights and is equal in expectation to n, the number of examples in the dataset. It is a biased estimate of the expected reward but with lower variance than the \textit{basic importance sampling} estimator.
It is a consistent estimator of $\lE_{\pi_t}[r]$ and the bias decreases in $1/n$. Thus NIS, with a certain amount of data is very close to BIS and the variance is not decreased. 
\\
\\
The main problem of such methods aimed at reducing the variance without introducing any bias (at least asymptotically) is that if we do not dispose of a strong external knowledge, we do not reduce the variance that much.

\subsection{Capping weights} 
\label{sec:cappingWeights}

\paragraph{Capped importance sampling} 
Capping weights is another way to control the variance of the IS estimator. Two forms of capping were introduced: max capping and zero capping. 
For some capping value $c>0$, these estimators are respectively defined as: 
\begin{align*}
\hat{\cR}^{\rm maxCIS}(\pi_t, c) =  \frac{1}{n} \sum_{(x,a,r) \in \cS_n} \min(w(a,x),c) r 
\end{align*}
and 
\begin{align*}
\hat{\cR}^{\rm zeroCIS}(\pi_t, c) =  \frac{1}{n} \sum_{(x,a,r) \in \cS_n} \textbf{1}_{w(a,x) < c} w(a,x) r
\end{align*}
In the following, we denote the capped weights by $\bw(a,x)$, for zero capping $\bw(a,x) = \textbf{1}_{w(a,x) < c} w(a,x)$ and for max capping $\bw(a,x) = \min(w(a,x), c)$.
All calculations presented in the following -- except when explicitly specified otherwise -- are valid for both zero capping and max capping.
Intuitively, the behaviour of these two estimators is the same since the capped importance weights are very big compare to the capping parameter: in Fig. \ref{fig:w_distrib}, we provide some empirical results on weights encountered when evaluating recommendation system policies. 
Both capping methods show very similar results and we only report max capping results in the experiments.

However, capping comes at the cost of introducing a bias: when introducing capping on the IS estimator, we only account for a sub-part of $\lE_{\pi_t}[R]$:
\begin{align*}
\lE_{\pi_t}[R] &= \lE_{\pi_t}\left[R \frac{\overline{W}}{W}\right]  + \lE_{\pi_t}\left[R \frac{W-\overline{W}}{W}\right]  \\
& = \underbrace{\lE_{\pi_p}[\hat{\cR}^{\rm CIS}(\pi_t, c)]}_{\cR^{\rm CIS}(\pi_t, c)}
 + \underbrace{\lE_{\pi_t}\left[R \frac{W-\overline{W}}{W}\middle| W>c\right] \lP_{\pi_t}(W>c)}_{\cB^{\rm CIS}(\pi_{t}, c)}
\end{align*}
One of the main issues of only estimating $\cR^{\rm CIS}(\pi_t, c)$ is that the bias term $\cB^{\rm CIS}(\pi_{t}, c)$ becomes low only if $\lE_{\pi_{t}}(R | W >c)$ is low.  It means that $\lE[R|A,X]$ has to be low for all $a$ such that $w(a,x) > c$ -- i.e. for all actions that $\pi_{t}$ chooses much more often than $\pi_{p}$. As our test policy $\pi_{t}$ is usually a trial for improving the current system $\pi_{p}$, it is not really satisfying to have a estimator with a low bias only if $\pi_{t}$ performs poorly on actions it chooses more often than the current system. 
More formally, as we want to take a statistically significant decision, we need to build a confidence interval around $\hat{\cR}^{\rm CIS}(\pi_t, c)$.
We can bound $\cR^{\rm CIS}(\pi_t, c)$ using any concentration bound (e.g. an empirical Bernstein bound \cite{maurer2009empirical, bottou2013counterfactual}). However the bias term can only be controlled in the worst case: $0 \leq \cB^{\rm CIS}(\pi_{t}, c) \leq r_{\max} \left(1-\lP(W\leq c)\right)$. As explained right before, this inequality only gets tight when $r_{\max}$ is lower on the capped volume than elsewhere. 

\subsection{No good practical trade-off for CIS}
\label{sec:no-tradeoff}
In practice, no capping parameter for CIS yields confidence interval small enough to decide whether $\pi_t$ is a better policy than $\pi_p$. The capping parameter used in CIS introduces a bias-variance tradeoff: Increasing its value decreases the bias and increases the variance of the estimator. However, we demonstrated in the previous section that a volume of capped weights too high will lead to an invalid bias-variance tradeoff. Based on experimental results, we show that this critical limit is often reached for recommendation systems used in production. 

We consider an A/B test that was implemented in production and look at the importance weights used to compute the performance of $\pi_t$ based on logs gathered with $\pi_p$. \autoref{fig:w_distrib} shows their distribution according to the test policy. \autoref{fig:bias_variance} shows the variance and the upper bound on the bias (see \autoref{sec:cappingWeights} for definition) depending on the capping parameter. We clearly see the tradeoff between the two measures when the capping parameter changes. In both cases, we can determine the values of the capping parameter for which the variance and bias are lower to the uplift that we want to measure offline (usually, we consider a 1\% uplift). \autoref{fig:bias_variance} shows that no value of the capping parameter satisfies both criteria: a good variance is achieved below $10^2$ whereas a good value for bias requires a capping parameter above $10^{23}$.

This problem led us to design new estimators that model the bias introduced by capping and achieve better bias-variance tradeoff.

\begin{figure}[h!]
\includegraphics[height=0.20\textheight]{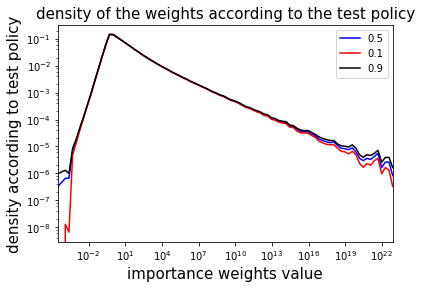}
\caption{Distribution of the importance sampling weights when sampled according to the test policy with 80\% confidence interval. (0.1 corresponds to 10th centile and 0.9 to 90th centile)}
\label{fig:w_distrib}
\end{figure}

\begin{figure}[htb]
\begin{tabular}{cc}
\includegraphics[width=0.22\textwidth, height=0.19\textheight]{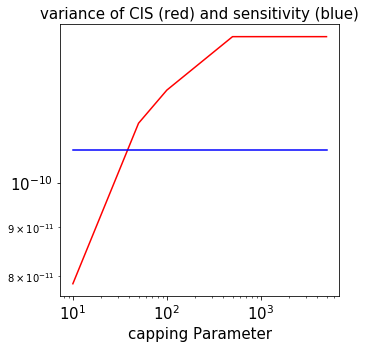} &
\includegraphics[width=0.22\textwidth, height=0.19\textheight]{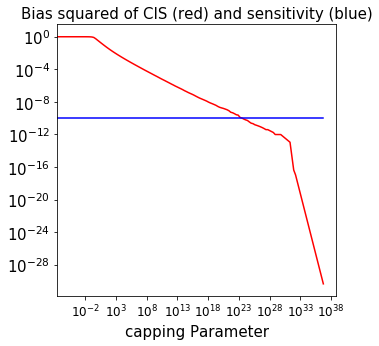}
\end{tabular}
\caption{Variance of CIS, upper bound on the bias of CIS and sensitivity depending on the capping parameter.}
\label{fig:bias_variance}
\end{figure}

\section{Modelling the bias}

As discussed in the previous section, CIS can not reach a tradeoff with low variance and provable low bias. In the following, we present different estimators that model the bias at different scale. We show that the well-known Normalised Capped Importance Sampling provides a model at the global level. Then, we present a new estimator that models the bias at a contextual state level.   	

\subsection{Global bias model}
\label{ssec:ncis}

\paragraph{Normalised Capped Importance Sampling (NCIS)}

A common practice in the literature (e.g. see experiments in \cite{Swaminathana}) is to use the \emph{normalised capped importance sampling} estimator that is defined as: 
\begin{align}
\hat{\cR}^{\rm NCIS}(\pi_t, c) &= \frac{\frac{1}{n}\sum_{(x,a,r) \in \cS_n} \bw(a,x) r}{\frac{1}{n}\sum_{(x,a,r) \in \cS_n} \bw(a,x)}
\label{eq:NCIS}
\end{align}
It involves in re-adjusting the expected reward proportionally to the probability mass capped.
In the following, we show how this estimator models the bias introduced by capping. 
Asymptotically, we can compute the bias of this estimator (the proof is in the appendix):
\begin{align*}
\lP\left( \lim_{n\to\infty} \hat{\cR}^{\rm NCIS}(\pi_t, c, S_n) = \frac{\lE_{\pi_t}\left[\frac{\bW R}{W}\right]}{\lE_{\pi_t}\left[\frac{\bW}{W}\right]}\right)= 1
\end{align*}
with
\begin{align} 
\cR^{\rm NCIS}(\pi_t,c) \triangleq \frac{\lE_{\pi_t}\left[\frac{\bW R}{W}\right]}{\lE_{\pi_t}\left[\frac{\bW}{W}\right]} =  \cR^{\rm CIS}(\pi_t, c) + \cR^{\rm CIS}(\pi_t, c) \frac{1 - \lE_{\pi_{t}}\left[\frac{\bW}{W}\right]}{\lE_{\pi_{t}}\left[\frac{\bW}{W}\right]}
\label{eq:ncis_expect}
\end{align}
Intuitively, if we reuse the expression of $\cB^{\rm CIS}(\pi_{t}, c)$, we see that NCIS, instead of setting $\cB^{\rm CIS}(\pi_{t}, c)$ to zero  like CIS, approximates the performance on the capped volume by the performance on the non-capped volume. 

In the case of zero capping, the approximation made by NCIS is even more intuitive: NCIS makes the assumption that overall, 
\begin{align*} 
\lE_{\pi_{t}}\left[R|W >c\right] \approx  \lE_{\pi_{t}}\left[R|W<c\right]\,.
\end{align*}
and approximates $\cB^{\rm CIS}(\pi_{t}, c)$ as: 
\begin{align*}
\cB^{\rm CIS}\left[\pi_{t}, c\right] \approx \lE_{\pi_{t}}\left[R|W<c\right]\lP_{\pi_{t}}(W> c)
\end{align*}
Those expectations are both on the action $A$ and on the context $X$.

This approximation is exact at least in the trivial case when the reward $R$ is independent from both the context $X$ and the action $A$, and thus of the weight $W$. We can expect the approximation to be reasonable when the noise level is high because in this case the dependency of $R$ on $X$ and $A$ is low.
However we shall see in the next section why this assumption may be poor in practice.

\subsection{Need for a local bias modelling}
\label{counterexample}
The NCIS estimator compensates the capping with a proportional rescaling. However, this rescaling is performed globally, while the underlying data may contain many sub-groups with different average rewards. For instance, the MovieLens dataset \cite{Harper2015} exhibits a small group of frequent user and a large majority of occasional users. In the context of e-commerce, \cite{businessinsider} reports that Prime members spend in average 4.6 times more than non-prime members on Amazon\texttrademark{}. When operating with such different groups of customers, it is quite common to introduce changes in the recommender system that do not equally affect the different groups. For instance, one could decide to favour good deal products for registered customers, which impact both groups differently. As the new policy impact differs from registered to non-registered customers, the capping may be stronger for one of the groups (formalised by the quantity $\bW/W$), thus a blind global rescaling introduces a large bias (see Table \ref{tab:counter_ex} for a toy counter-example).

\begin{table}
  \caption{Counter-example for NCIS. The test policy $\pi_t$ proposes more good deal products to registered customers, improving the (already good) performance on them. The effect is neutral on unknown (low performing) customers. For $\pi_t$, NCIS estimates a reward of 1.8 while the (true) expected reward is 2.1. As the reward of $\pi_p$ is 1.9, we would wrongly conclude that $\pi_t$ is worse than $\pi_p$.}
  \label{tab:counter_ex}
  \begin{tabular}{lcc}
    \toprule
    &\emph{registered customers}&\emph{unknown customers}\\
    \midrule
    Proportions & 0.1 & 0.9 \\
    Performance of $\pi_p$ & 10 & 1\\
    Performance of $\pi_t$ & 12 & 1\\
    $\lE(\bW/W)$ & 0.7 & 1\\
  \bottomrule
\end{tabular}
\end{table}

More formally, from \eqref{eq:ncis_expect}, we can directly deduce that the asymptotic bias of NCIS can be written as
 $$\cB^{\rm NCIS}(\pi_t,c) \triangleq \lE_{\pi_t}[R] - \cR^{\rm NCIS}(\pi_t,c) = - \left.{\rm Cov}_{\pi_t}\left(R, \frac{\overline{W}}{W}\right) \middle/ \lE_{\pi_t}\left[\frac{\overline{W}}{W}\right]\right.,$$ 
 which makes NCIS consistent when $W$ and $R$ are independent. However, as we just discussed, the reward $R$ and the capping $\bW/W$ may be correlated through a third confounding variable such as the type of customer. This information is contained in the context $X$, which leads us to decompose the bias conditionally to $X$,
$$\cB^{\rm NCIS}(\pi_t,c) = - \frac{{\rm Cov}_{\pi_t}\left(\lE[R|X], \lE\left[\frac{\overline{W}}{W}\middle|X\right]\right) + \lE_{\pi_t}\left[{\rm Cov}\left(R, \frac{\overline{W}}{W}\middle| X\right)\right]}{\lE_{\pi_t}\left[\frac{\overline{W}}{W}\right]}$$

To improve over NCIS, we can assume the first term of the numerator to be dominant. Indeed, in practice the recommendation (the action $A$) itself is usually an unsolicited attempt to influence a user with a pre-existing intent (the context $X$). Thus, one can realistically expect the reward $R$ to be more correlated to the initial intent than to the recommendation.
Following this idea, we propose next several ways to build "local" versions of the NCIS estimator -- by normalising conditionally to the context -- to get rid of the first term of the bias (the biggest) and obtain estimators with a much smaller bias.

\subsection{Piecewise constant model}
\label{sec:stratification}
The following estimator is the first step toward a model of the bias at a finer scale. The simplest way to build a local version of NCIS is to use stratification. In other words, to make a piecewise version of NCIS. The goal is to find a partition $\cG$ of $\cX$ in order to use the decomposition of the expectation over this partition,
\begin{align*}
\lE_{\pi_{t}}[R]  =  \sum_{g\in\cG} \lE_{\pi_{t}}[R | X \in g] \lP( X \in g)
\end{align*}
and then estimate the expectation separately on each group of the partition with the NCIS estimator:
\begin{align*}
\hat{\cR}^{\rm PieceNCIS}(\pi_t, c) = &  \sum_ {g\in\cG} \alpha_g \hat{\cR}|^{\rm NCIS}_g(\pi_t, c)
\end{align*}
where $\alpha_g = \sum_{(x,a,r) \in \cS_n}  \textbf{1}_{x \in g} / n$ estimates $\lP( X \in g)$ and $\hat{\cR}|^{\rm NCIS}_g(\pi_t, c)$ is the restriction of the NCIS estimator to the group $g$ to estimate $\lE_{\pi_{t}}[R | X \in g]$:
\begin{align*}
\hat{\cR}|^{\rm NCIS}_g(\pi_t, c) = &  \frac{\sum_{(x,a,r) \in \cS_n} \textbf{1}_{x\in g}\bw(a,x) r}{\sum_{(x,a,r) \in \cS_n} \textbf{1}_{x\in g}\bw(a,x)}
\end{align*}
A desirable constraint on $\cG$ is to be independent from the tested policy $\pi_t$. 
When removing the capping -- i.e. $c \to \infty$ -- the estimator $\hat{\cR}^{\rm PieceNCIS}(\pi_t, c)$ is consistent.

Apart from this constraint, the partition can be constructed from any hand-crafted splits on features of $x$. 
However, even though easy in practice, this option is not satisfying because the performance of the estimator will strongly depend on a manual choice of the partition.
A more agnostic method to build an empirically "good" partition, would be to learn a value function $V(x)$ to predict the expected reward given a context $x$ and build the partition based on the output of this model. For instance, in the experiments presented in Section \ref{sec:experiments}, we use a regular partition of the output of the model in the log-space (base $b$):
\begin{align*}
\cG = \left\{V^{-1}(I_k): I_k = [b^k, b^{k+1}], k\in \lZ\right\}
\end{align*}
This approach provides two advantages: 1) given a group, the reward does not depend strongly on $x$ anymore, which was the first objective of stratification, 2) the size of the partition is more controlled than with a hand-crafted one, leading to more samples per group and thus less estimation problems.
However, it comes at the cost of needing to fit a value function on a separate set of data.

\subsection{Pointwise model}
\label{sec:apprTest}
To avoid having to learn and design a value model to perform a stratification, we push the idea further and use the decomposition:
\begin{align*}
\lE_{\pi_{t}}[R]  =  \sum_{x\in\cX} \lE_{\pi_{t}}[R | X = x] \lP( X = x)
\end{align*}

However, building an estimator of $\lE_{\pi_{t}}[R | X = x]$ becomes more challenging.
Following the underlying idea of NCIS, we want to make the following approximation:
\begin{align*}
\lE_{\pi_{t}}[R | X = x] \approx \frac{\lE_{\pi_{t}}\left[R\frac{\overline{W}}{W}\middle| X = x\right]}{\lE_{\pi_{t}}\left[\frac{\overline{W}}{W} \middle| X = x\right]}
\end{align*}
Unfortunately, when conditioning on a value of $x$, we cannot use a simple ratio estimator such as 
$$\frac{\sum_{(x',a,r) \in \cS_n} \textbf{1}_{x'=x}\bw(a,x') r}{\sum_{(x',a,r) \in \cS_n} \textbf{1}_{x'=x}\bw(a,x')}$$
Indeed, the number of samples in the training set exactly matching a given value of $x$ is very low (it can even be 0 if $x$ is continuous), so the bias of the ratio estimator is not negligible anymore.
Fortunately, when conditioned on $x$, we can be much better at the estimation of $\lE_{\pi_{t}}\left[\frac{\overline{W}}{W} \middle| X = x\right]$. 
In fact, we could even compute it exactly by a simple marginalisation on the actions, but the number of actions is too large in our case to perform this computation in a reasonable time.
Moreover, we are not aware of any closed-form for it, even when both policies follow simple models (e.g. Plackett-Luce or Mallows).

However, we can efficiently sample from the policy $\pi_t$, and therefore compute a Monte Carlo approximation of this probability. 
Since we actually want an estimator of the ratio ${1}/{ \lE_{\pi_{t}}\left[\frac{\overline{W}}{W} \middle| X = x\right]} $, we can use a rejection sampling technique such as Lahiri's \cite{lahiri1951method} or Midzuno-Sen  method \cite{midzuno1951sampling, sen1952present} to get an unbiased estimate denoted $\hat{IP}_c(x)$. 
In practice to build an estimator of ${1}/{ \lE_{\pi_{t}}\left[\frac{\overline{W}}{W} \middle| X = x\right]} $, we use the Midzuno-Sen method. We define a random variable $U$ uniformly distributed between 0 and 1. Then, we do successively
\begin{itemize} 
\item sample $u$ from the uniform distribution and $w_1$ from $\pi_t$ until reaching $w_1 < u$
\item sample $w_2,...,w_n$ from $\pi_t$ 
\item return $\frac{n}{\frac{\overline{w_1}}{w_1}+...+\frac{\overline{w_n}}{w_n}})$
\end{itemize}
Through this method, the expectation of $n/(W_1/\overline{W_1}+...+W_n/\overline{W_n})$ is equal to ${1}/{ \lE_{\pi_{t}}\left[\frac{\overline{W}}{W} \middle| X = x\right]}$. In the following, we will note it $\hat{IP}_c(x)$.

Finally, we can define the following estimator:
\begin{align*}
\hat{\cR}^{\rm PointNCIS}(\pi_{t},c) = \frac{1}{n}\sum_{(x,a,r) \in \cS_n} \hat{IP}_c(x)  \bw(a,x) r
\end{align*}
We need to notice one pitfall of this method: if the test distribution and the prod distribution are really dissimilar, the expectation $ \lE_{\pi_{t}}\left[\frac{\overline{W}}{W} \middle| X = x\right] $  may become very low. It means that the effective weight by which we multiply the reward $r$, $ \frac{ w }{ \lE_{\pi_{t}}\left[\frac{\overline{W}}{W} \middle| X = x\right] }$ might be much bigger than the capping value, and the estimator would have a high variance. We can overcome this problem when using max capping by decreasing the capping value when the distributions do not overlap enough (see appendix for justification).

\begin{table}[h!tb]
\centering
\caption{Summary table of the different estimators. First column sum up the formulae of the estimators, second one the approximation $\tilde{\cB}$ of the bias term $\cB$ in the general case and the case of zero capping}
\label{my-label}
\bgroup
\def\arraystretch{1.5}
\setlength\tabcolsep{0.4em}
\begin{tabular}{lcc} 
\hline
 &   $\hat{\cR}(\pi_t,c)$ & approx $\tilde{\cB}^{\rm CIS}(\pi_t,c,x)$  \\ \hline
CIS & $\frac{1}{n}\sum_{S_n} r \overline{w}(a,x)$ & $0$  \\
NCIS & $\frac{\sum_{\cS_n} r \overline{w}(a,x)}{\sum_{\cS_n} \overline{w}(a,x)} $ & $\lE_{\pi_t}\left[\frac{R\overline{W}}{W}\right] \!\frac{1 - \mathbb{E}_{\pi_{t}}\left[\frac{\bW}{W}\right]}{\mathbb{E}_{\pi_{t}}\left[\frac{\bW}{W}\right]}$ \\
PieceNCIS & $\sum\limits_{g\in\cG} \alpha_g \hat{\cR}|^{\rm NCIS}_g(\pi_t, c)$ & $\lE_{\pi_t}\!\!\left[\frac{R\overline{W}}{W}\middle | X\!\in\! g \right] \!\frac{1 - \mathbb{E}_{\pi_{t}}\left[\frac{\bW}{W}\middle| X\in g\right]}{\mathbb{E}_{\pi_{t}}\left[\frac{\bW}{W}\middle| X\in g\right]}$ \\
PointNCIS  & $\frac{1}{n}\sum\limits_{\cS_n} \hat{IP}_c(x)  \bw(a,x) r$ &  $\lE_{\pi_t}\!\!\left[\frac{R\overline{W}}{W}\middle | X\!=\!x \right] \!\frac{1 - \mathbb{E}_{\pi_{t}}\left[\frac{\bW}{W}\middle| X=x\right]}{\mathbb{E}_{\pi_{t}}\left[\frac{\bW}{W}\middle| X=x\right]}$ \\\hline
\end{tabular}
\egroup
\end{table}
\section{Experiments}
\label{sec:experiments}
We use the A/B test history of a commercial recommender system to compare the uplifts $\Delta \hat R$ estimated by our \emph{offline A/B tests} methods with the ground truth $\Delta R$ estimated during the \emph{online A/B tests}.

\label{sec:results}
\begin{figure}[h!]
\includegraphics[height=0.25\textheight]{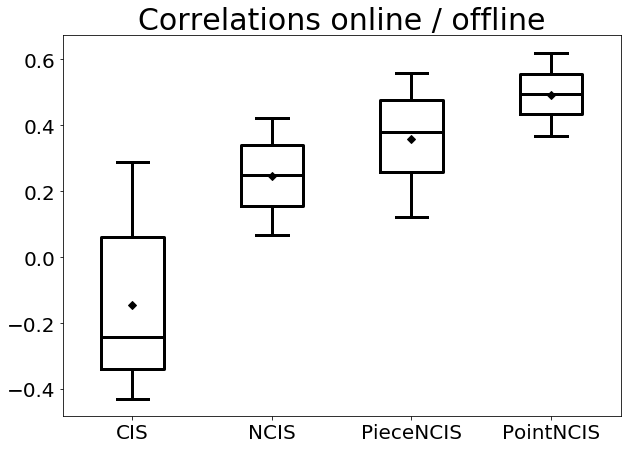}
\caption{Correlation between online and offline uplifts. Confidence bounds are obtained using bootstraps. Whiskers are $10\%$ and $90\%$ quantiles and the boxes represent quartiles.}
\label{fig:correlation}
\end{figure}

\subsection{Dataset}

We have access to a proprietary dataset of 39 \emph{online A/B tests}, representing a total of few hundreds of billions of recommendations.
We consider a click-based business metric. Since clicks are relatively rare, the reward signal has a high variance, even conditioned to the context and to the action.
For each test, we consider: 
\begin{itemize}
\item the two policies involved in the A/B test $\pi_p$ and $\pi_t$,
\item for each display (units $x$ whose features represent the context of the display and past interactions with the user), we have access to,
\begin{itemize}
\item the top-$K$ ranking (action $a$) chosen along with its probability under the logging policy -- $\pi_p(a|x)$ on control population A or $\pi_t(a|x)$ on test population B,
\item the set of eligible items,
\item the observed reward $r$.
\end{itemize}
\end{itemize} 

On each A/B test, we can compute the online estimate of the uplift $\Delta \hat \cR(\pi_p, \pi_t) = \hat \cR(\cS_t) - \hat \cR (\cS_p)$ -- our ground truth -- and for any estimator, the offline estimate of the uplift $\hat \cR^{\rm EST}(\pi_t, c, \cS_p) - \hat \cR (\cS_p)$. We added ${\cS_p}$ in the arguments of the estimator to emphasize the estimator is computed on data collected by the \emph{prod} population.

We could run several \emph{offline A/B tests} based on the data on a single \emph{online A/B test} -- e.g. $\Delta \hat \cR(\pi_p, \pi_t)$ on the data of control population $A$ or $\Delta \hat \cR(\pi_t, \pi_p)$ on the data of test population $B$. We only keep one of them, because if we compare them with the same \emph{online A/B test} result, the comparisons wouldn't be independent. In the following, we consider $\Delta \hat \cR(\pi_p, \pi_t)$ on the data of control population $A$, which means the control (logging) policy is the production one $\pi_p$ and the tested one is $\pi_t$.

\subsection{Estimators}

We compare here four estimators presented in the previous sections: CIS, NCIS, PieceNCIS and PointNCIS. We set the capping value to $c=100$ based on the graph presented in Section \ref{sec:no-tradeoff}.
We discarded non-capped estimators such as IS or NIS due to their very high variance: the confidence intervals on $\Delta \hat R$ would never lead to a positive or negative decision, it would always be neutral.
Moreover, we chose to ignore the doubly-robust estimator (DR) for two reasons. Indeed if the estimator is not capped, it suffers the same issue as IS and NIS. When it is capped \cite{thomas2016data}, its performance strongly depends on the reward model such that the optimal policy under such estimator when $c \to 0$ is the deterministic policy choosing the argmax action on the reward model.

\paragraph{Computation time}
The NCIS and PieceNCIS estimators need to be computed on the entire dataset, on both positive and negative examples. 
Like CIS, PointNCIS just needs to read the positive examples : it leads to a huge gain in computation time and efficiency when the reward is very sparse.

\begin{figure}
\includegraphics[height=0.28\textheight]{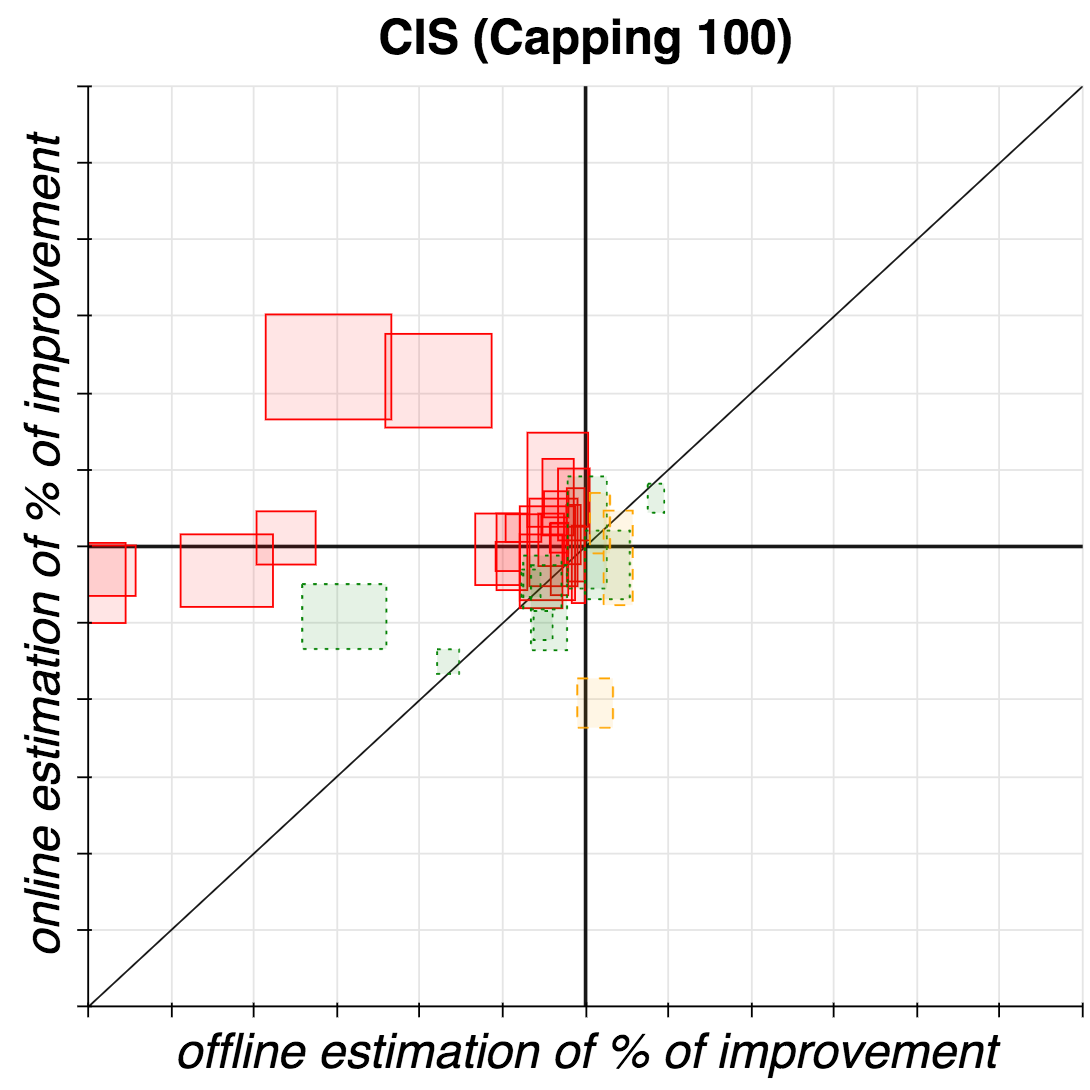}
\includegraphics[height=0.28\textheight]{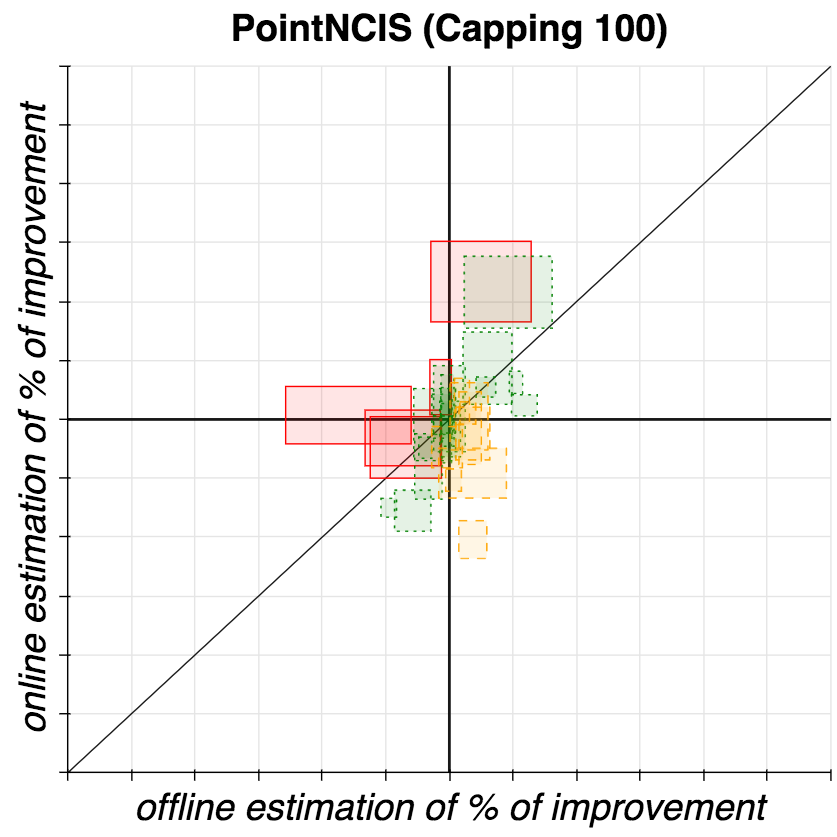}
\caption{Comparison of online/offline decision. A box is an A/B test. The width (resp. height) of the box is a $90\%$ confidence interval on the offline (resp. online) uplift. The scale is the same for both axis. Green/dotted: right decision. Orange/dashed: false positive). Red/plain: false negative.}
\label{fig:ooboxes}
\end{figure}

\subsection{Correlation in Online / Offline Uplifts}

To compare the performance of the different estimators, we first simply compute the correlations between the values of $\Delta \hat R$ and $\Delta R$ on the series of A/B tests. Results are presented in Fig. \ref{fig:correlation}. As expected, CIS performs quite poorly, due to the strong capping. Compensating the bias only globally with NCIS already proves to be a good improvement. Then, the more local the model on the bias, the better performing the estimator: the piecewise estimator is better than the global and the pointwise is better than the piecewise.

While these first results are already satisfying, it does not reflect the fact that all types of errors are not as bad and that the different estimators do not lead to the same types of errors -- which, as we will see in the following, explains why the correlation online/offline of CIS seems to be negative.

\subsection{False positive VS False negative rate}

From the point of view of continuously improving a production recommender system, false negatives are much worse than false positives. A false positive -- predicted positive by the offline estimator and actually negative or neutral during the A/B test -- has a cost limited to the A/B test duration. It slows down the pace of improvement of the system. On the contrary, a false negative is an actual improvement of the system that will never be tested online due to a mistake in the offline estimation. Such an error has a long term cost, as it is an actual improvement that won't be rolled-out.

In Figure \ref{fig:ooboxes}, we show the comparison between online and offline decisions for our two extreme estimators CIS and PointNCIS.
The number of false negatives of CIS is especially interesting: it reflects that CIS always underestimates the reward. It explains why the correlation of CIS is negative in Fig.~\ref{fig:correlation}, just because there are more positive decisions than negative in our dataset.

To have a clearer understanding of the quality of the different estimators, we sum up in Table \ref{tab:performance} the uplift correlations (same data as in Fig. \ref{fig:correlation}) along with metrics on the decision (positive / neutral / negative) such as the precision and the false negative rate (FNR). 
We can split the improvement in three. 
First, only using a global model of the bias -- such as NCIS -- is enough to drastically reduce the number of false negative errors: the FNR goes from 0.64 (CIS) to 0.33 (NCIS). This is also reflected by an improvement in precision and correlation.
Then, another improvement comes from using more local approximations of the bias: precision and correlation also improves from NCIS to PieceNCIS or PointNCIS. 
When looking more carefully at the changes of decisions between the estimator, we actually noticed several A/B tests that include changes similar to the counter-example presented in Sec. \ref{counterexample}, which support the need for more local estimators  and explain the improvement the more local estimators show.
Finally, we can also observe a better correlation of PointNCIS over PieceNCIS. However, as the precision is quite similar, it may only come from a better estimation of the value of the uplift and not from more aligned decisions. 
In the end, its performance and computation efficiency improvements make PointNCIS a better choice than the other estimators.

\begin{table}[h!tb]
\centering
\caption{Performance of the different estimators. Uncertainty is reported based on $10\% / 90\%$ confidence intervals obtained by bootstrapping. \emph{CI size} indicates the average relative width of the confidence interval compared to CIS one.}
\label{tab:performance}
\begin{tabular}{lcccc}
\hline
 &   Correlation & Precision & FNR & CI size \\ \hline
CIS &$-0.15\pm0.35$ & $0.28\pm0.10$ & $0.64\pm0.11$ & $ - $ \\
NCIS & $0.24\pm0.18$ & $0.47\pm0.11$ & $0.33\pm0.11$ & $1.1$ \\
PieceNCIS & $0.36\pm0.22$& $0.53\pm0.11$  & $0.28\pm0.08$ & $1.5$ \\
PointNCIS  & $0.49\pm0.13$& $0.56\pm0.13$  & $0.16\pm0.09$ & $0.8$ \\ \hline
\end{tabular}
\end{table}

\section{Conclusion}
Through the paper, we exhibited the different sub-optimality properties of the traditional counterfactual estimators in the setting of recommender systems. We introduced several new estimators exhibiting a better bias-variance trade-off than the traditional Normalised Capped Importance Sampling estimator. Then, we provided a benchmark of the different offline estimators with experiments conducted on a large commercial recommender system. In the future, we plan to investigate other ways to reduce the variance of the offline estimators. A simple way would be to exploit that several recommendation policies are implemented every week: combining them in order to make the mixture of policies closer to the test policy could help to reduce the variance of the different offline estimators.  

\appendix
\section{APPENDIX}
\subsection{Analysis of the bias of the NCIS estimator}
In this section, we analyse the bias of the Normalised Capped Importance Sampling estimator in the general case (true for both max and zero capping). We also show how the formula can be simplified in the case of zero capping. 

\begin{lemma}[Asymptotical behaviour of $\hat{\cR}^{\rm NCIS}$]
Let $\hat{\cR}^{\rm NCIS}$ the normalised capped importance sampling estimator defined in \eqref{eq:NCIS}. Then, 
\begin{align*} 
\mathbb{P}\bigg( \lim_{n\to\infty} \hat{\cR}^{\rm NCIS}(\pi_t, c, \cS_n) = \hat{\cR}^{\rm CIS} + \hat{\cR}^{\rm CIS} \frac{1 - \lE_{\pi_{t}}[\frac{\bW}{W}]}{\lE_{\pi_{t}}[\frac{\bW}{W}]}\bigg) = 1
\end{align*} 
\end{lemma}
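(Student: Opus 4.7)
The plan is to reduce the almost-sure convergence statement to the Strong Law of Large Numbers (SLLN) applied separately to the numerator and denominator of $\hat{\cR}^{\rm NCIS}$, and then verify that the limiting ratio is algebraically equal to the claimed expression.

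First, I would rewrite the estimator as $\hat{\cR}^{\rm NCIS}(\pi_t, c, \cS_n) = N_n / D_n$ where $N_n = \frac{1}{n}\sum_{(x,a,r)\in\cS_n} \bw(a,x) r$ and $D_n = \frac{1}{n}\sum_{(x,a,r)\in\cS_n} \bw(a,x)$. Because the samples in $\cS_n$ are i.i.d.\ draws from $\pi_p$ (with $R$ drawn conditionally from the environment), the terms in each sum are i.i.d.\ and have finite expectation (the capped weights $\bW$ are bounded by $c$ and the reward by $r_{\max}$). By the SLLN, $N_n \xrightarrow{\rm a.s.} \lE_{\pi_p}[\bW R]$ and $D_n \xrightarrow{\rm a.s.} \lE_{\pi_p}[\bW]$. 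Applying the standard change-of-measure identity $\lE_{\pi_p}[f(A,X) W] = \lE_{\pi_t}[f(A,X)]$ in the form $\lE_{\pi_p}[\bW R] = \lE_{\pi_p}[(\bW/W) W R] = \lE_{\pi_t}[R\bW/W]$, and similarly $\lE_{\pi_p}[\bW] = \lE_{\pi_t}[\bW/W]$, both expressions can be rewritten under the test policy.

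Next, I would invoke the continuous mapping theorem: since $(u,v)\mapsto u/v$ is continuous on $\{v\neq 0\}$ and $\lE_{\pi_t}[\bW/W] > 0$ (we assume $\pi_p$ puts non-zero mass on any action so $\bW$ is not almost surely zero), the ratio $N_n/D_n$ converges almost surely to $\lE_{\pi_t}[R\bW/W]/\lE_{\pi_t}[\bW/W]$. This already proves the convergence, and it remains to identify this limit with the right-hand side of the claim.

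Finally, I would carry out the short algebraic check. Recall $\cR^{\rm CIS}(\pi_t, c) = \lE_{\pi_p}[\bw(a,x) r] = \lE_{\pi_t}[R\bW/W]$ by the same change-of-measure identity, so
\begin{align*}
\frac{\lE_{\pi_t}[R\bW/W]}{\lE_{\pi_t}[\bW/W]} = \cR^{\rm CIS}(\pi_t,c) \cdot \frac{1}{\lE_{\pi_t}[\bW/W]} = \cR^{\rm CIS}(\pi_t,c)\left(1 + \frac{1 - \lE_{\pi_t}[\bW/W]}{\lE_{\pi_t}[\bW/W]}\right),
\end{align*}
which matches the stated limit. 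The only subtlety worth flagging, rather than a genuine obstacle, is ensuring $\lE_{\pi_t}[\bW/W] \neq 0$ so that the continuous mapping step is valid; this is guaranteed whenever $\lP_{\pi_p}(W < \infty) > 0$, i.e.\ there is some overlap between $\pi_p$ and $\pi_t$ that survives the capping threshold $c$. Everything else is a direct SLLN plus one-line algebra.
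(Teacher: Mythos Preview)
Your proposal is correct and follows essentially the same approach as the paper: apply the strong law of large numbers separately to the numerator and denominator (both i.i.d.\ averages under $\pi_p$), use the change of measure $\lE_{\pi_p}[\bW\,\cdot\,] = \lE_{\pi_t}[(\bW/W)\,\cdot\,]$ to rewrite the limits under $\pi_t$, and finish with the one-line algebraic identity $1/\lE_{\pi_t}[\bW/W] = 1 + (1-\lE_{\pi_t}[\bW/W])/\lE_{\pi_t}[\bW/W]$. Your version is in fact slightly more careful than the paper's, since you explicitly justify integrability via the bound $\bW \leq c$, invoke the continuous mapping theorem for the ratio, and flag the nondegeneracy condition $\lE_{\pi_t}[\bW/W]>0$.
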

\begin{proof} 
First, we study the convergence of the numerator: It is the mean of n i.i.d. random variables. Thus, according to the strong law of large numbers, with probability one, 

\begin{align*} 
\lim_{n\to\infty} \frac{1}{n}\sum_{(x,a,r) \in \cS_n} \bw(a,x) r &= \lE_{\pi_{p}}\left[\bW R\right] = \lE_{\pi_{t}}\left[\frac{\bW R}{W}\right]
\end{align*} 

The denominator is also the mean of n i.i.d. random variables. Then, with probability one, 

\begin{align*} 
\lim_{n\to\infty} \frac{1}{n}\sum_{(x,a,r) \in \cS_n} \bw(a,x) &= \lE_{\pi_{p}}\left[\bW\right] = \lE_{\pi_{t}}\left[\frac{\bW}{W}\right]
\end{align*} 

Hence, we reach
\begin{align*} 
 \lim_{n\to\infty} \hat{\cR}^{\rm NCIS}= \frac{\lE_{\pi_{t}}\left[\frac{\bW R}{W}\right]}{\lE_{\pi_{t}}\left[\frac{\bW}{W}\right]} 
  &=\lE_{\pi_{t}}\left[\frac{\bW R}{W}\right] + \lE_{\pi_{t}}\left[\frac{\bW R}{W}\right] \frac{1 - \lE_{\pi_{t}}\left[\frac{\bW}{W}\right]}{\lE_{\pi_{t}}\left[\frac{\bW}{W}\right]}
 \\& =\hat{\cR}^{\rm CIS} + \hat{\cR}^{\rm CIS} \frac{1 - \lE_{\pi_{t}}\left[\frac{\bW}{W}\right]}{\lE_{\pi_{t}}\left[\frac{\bW}{W}\right]}
\end{align*} 
\end{proof} 
We observe that NCIS is correcting CIS by approximating the performance on the capped volume by the performance on the non-capped volume. It helps to reduce the bias of CIS. 
We study now the particular case of zero capping. 
\begin{lemma}[Asymptotical behaviour of $\hat{\cR}^{\rm NCIS}_{\rm zero}$]: 
Let $\hat{\cR}^{\rm NCIS}_{\rm zero}$ the capped normalised importance sampling estimator. Then, 
\begin{align*} 
\lP\bigg( \lim_{n\to\infty} \hat{\cR}^{\rm NCIS}_{\rm zero}(\pi_t, c, \cS_n) = \lE_{\pi_{t}}\left[R \textbf{1}_{W\leq c} \right]\bigg) = 1
\end{align*} 
\end{lemma}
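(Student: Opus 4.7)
The plan is to specialise the preceding lemma to the zero-capping choice $\bw(a,x)=\textbf{1}_{w(a,x)<c}w(a,x)$, where the ratio $\overline W/W$ degenerates into the indicator $\textbf{1}_{W<c}$ and the general NCIS asymptotic form collapses considerably. Because this is a direct corollary of the previous lemma, I would not re-derive the SLLN step from scratch; instead I would quote the general limit and then carry out the simplification.

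First, I would invoke the previous lemma to get almost-sure convergence of $\hat{\cR}^{\rm NCIS}_{\rm zero}(\pi_t,c,\cS_n)$ to $\lE_{\pi_t}[\overline W R/W]/\lE_{\pi_t}[\overline W/W]$. In the zero-capping case the integrands inside these expectations are bounded (by $r_{\max}$ and $1$ respectively), so the limits are finite and the denominator is non-negative. Second, I would substitute $\overline W/W = \textbf{1}_{W<c}$ into both expressions: the numerator becomes $\lE_{\pi_t}[R\,\textbf{1}_{W<c}]$, and the denominator becomes $\lE_{\pi_t}[\textbf{1}_{W<c}] = \lP_{\pi_t}(W<c)$. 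Under the mild regularity assumption $\lP_{\pi_t}(W=c)=0$ (which holds whenever $W$ is continuously distributed, as is the case for Plackett--Luce or similar smooth policies), the indicators $\textbf{1}_{W<c}$ and $\textbf{1}_{W\leq c}$ agree a.s., matching the form of the statement.

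Third, I would apply the continuous mapping theorem for ratios — valid provided the denominator's limit is strictly positive, i.e.\ the capping is not total — to obtain the a.s.\ convergence of the ratio to the ratio of limits. This yields $\lE_{\pi_t}[R\,\textbf{1}_{W\leq c}]/\lP_{\pi_t}(W\leq c)$. To recover the bare expectation $\lE_{\pi_t}[R\,\textbf{1}_{W\leq c}]$ as stated, I would argue that the normalising factor $\lP_{\pi_t}(W\leq c)$ equals $1$ in the regime where zero-capping does not discard any action mass under $\pi_t$ — equivalently, wherever $c$ is chosen large enough relative to the support of $W$ under $\pi_t$.

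The main obstacle is precisely this last reconciliation: the general two-sided ratio does not automatically reduce to its numerator. My plan therefore is to state the result cleanly as the ratio first, then identify the explicit sufficient condition ($\lP_{\pi_t}(W\leq c)=1$) under which the denominator drops out and the statement holds as written. Everything else (SLLN, change of measure from $\pi_p$ to $\pi_t$ via $\lE_{\pi_p}[Wf]=\lE_{\pi_t}[f]$, continuous mapping) is routine and inherited from the previous lemma.
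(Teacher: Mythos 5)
Your proposal follows the same route as the paper: the paper's entire proof is the one-liner ``straightforward application of the previous lemma,'' i.e.\ substitute the zero-capping identity $\bW/W = \textbf{1}_{W<c}$ into the general limit $\lE_{\pi_t}\left[\frac{\bW R}{W}\right]/\lE_{\pi_t}\left[\frac{\bW}{W}\right]$. Where you go beyond the paper is in refusing to let the denominator disappear, and you are right to insist on this: the substitution gives
\begin{align*}
\lim_{n\to\infty}\hat{\cR}^{\rm NCIS}_{\rm zero}(\pi_t,c,\cS_n) \;=\; \frac{\lE_{\pi_t}\left[R\,\textbf{1}_{W<c}\right]}{\lP_{\pi_t}(W<c)} \;=\; \lE_{\pi_t}\left[R \mid W<c\right],
\end{align*}
which coincides with the printed claim $\lE_{\pi_t}\left[R\,\textbf{1}_{W\leq c}\right]$ only when $\lP_{\pi_t}(W\leq c)=1$ --- and in that regime the lemma is vacuous, since the limit is then simply $\lE_{\pi_t}[R]$ and there is no capping bias left to describe. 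The statement as printed appears to have dropped the normalising factor; the conditional-expectation form is the one consistent with Section 5.1 of the main text, where NCIS under zero capping is explicitly described as approximating the reward on the capped volume by $\lE_{\pi_t}\left[R\mid W<c\right]$. So rather than adding your sufficient condition $\lP_{\pi_t}(W\leq c)=1$ (which trivialises the result), the cleaner resolution is to correct the stated limit to $\lE_{\pi_t}\left[R\mid W\leq c\right]$. Your remaining technical points --- bounded integrands so the strong law applies, strict positivity of $\lP_{\pi_t}(W<c)$ for the ratio of limits, and the measure-zero distinction between $\{W<c\}$ and $\{W\leq c\}$ --- are all sound and are exactly the hypotheses the paper leaves implicit in its one-line proof.
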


\begin{proof} 
Straightforward application of the previous lemma
\end{proof} 
This analysis can obviously be extended to PieceNCIS.

\subsection{Analysis under varying capping parameter} 
To prove that we can control the variance of PointNCIS even though the value of $\tilde{w}_c(a,x) = \frac{ \overline{w}(a,x) }{ \lE_{\pi_t}\left[ \frac{\overline{W}}{W}  \middle| X = x \right]}$ may be higher than the capping $c$, we need to prove that for any value of $c > 1$, we can find a $\tilde{c}$ such that $\tilde{w}_{\tilde{c}}(a,x) \leq c$.
The following lemma states that it is possible to ensure this with max-capping.

\begin{lemma}[max-capping]
For any $a$ and $c > 1$, there exists $\tilde{c}$ such that $\tilde{w}_{\tilde{c}}(a,x) \leq c$.
\end{lemma}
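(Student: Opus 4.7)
The plan is to let the candidate capping threshold $\tilde{c}$ shrink toward $0$ and show that $\tilde{w}_{\tilde{c}}(a,x)$ then collapses to exactly $1$, which is strictly below any $c>1$. In other words, the construction is a short algebraic argument: the numerator and denominator in the definition of $\tilde{w}$ both become linear in $\tilde{c}$ when $\tilde{c}$ is small enough, and the constants of proportionality cancel.

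First I would unfold the definition of max-capping: $\overline{w}_{\tilde{c}}(a,x) = \min(w(a,x), \tilde{c})$, so that
\[
\tilde{w}_{\tilde{c}}(a,x) \;=\; \frac{\min(w(a,x), \tilde{c})}{\mathbb{E}_{\pi_t}\!\left[\min\!\left(1,\tilde{c}/W\right) \,\middle|\, X = x\right]}.
\]
Since the action space of top-$K$ rankings is finite and the logging policy $\pi_p$ places positive mass on every ranking (the standard overlap assumption recalled in Section~2), the quantity $w_{\min}(x) := \min\{w(a',x) : \pi_t(a'|x) > 0\}$ is strictly positive. This is what allows me to choose a $\tilde{c}$ below all positive weights at $x$.

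Next I would restrict to any $\tilde{c} < \min(w_{\min}(x), w(a,x))$. Under this choice the numerator is exactly $\tilde{c}$, and every summand inside the conditional expectation satisfies $\min(1, \tilde{c}/w(a',x)) = \tilde{c}/w(a',x)$. Rewriting $w(a',x) = \pi_t(a'|x)/\pi_p(a'|x)$ collapses the expectation to $\tilde{c} \sum_{a'} \pi_p(a'|x) = \tilde{c}$, using that $\pi_t$ and $\pi_p$ share support so that the sum over the support of $\pi_t$ recovers the full mass of $\pi_p$. The ratio is therefore exactly $1$, which is at most $c$.

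The only thing to be careful about is the common-support assumption between $\pi_t$ and $\pi_p$, but this is already required for any importance-sampling weight to be finite and is implicit throughout the paper. No concentration or limiting argument is needed: once $\tilde{c}$ is pushed below $w_{\min}(x)$ the identity $\tilde{w}_{\tilde{c}}(a,x)=1$ holds on the nose, so the lemma follows immediately.
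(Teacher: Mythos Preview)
Your argument is correct and follows the same underlying idea as the paper---push $\tilde{c}$ toward $0$ so that the max-capped weight in the numerator and the normaliser in the denominator both become proportional to $\tilde{c}$---but the execution differs in a noteworthy way.

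The paper works with the change of measure $\lE_{\pi_t}\!\left[\overline{W}/W \mid x\right] = \lE_{\pi_p}\!\left[\min(W,\tilde{c}) \mid x\right]$, lower-bounds the denominator by $\tilde{c}\,\lP_{\pi_p}(W \geq \tilde{c} \mid x)$, and then uses the \emph{limit} $\lP_{\pi_p}(W \geq \tilde{c} \mid x) \to 1$ as $\tilde{c}\to 0$. Your version instead exploits the finiteness of the action space to exhibit an explicit threshold $\tilde{c} < w_{\min}(x)$ below which the ratio equals $1$ \emph{exactly}, with no inequality or limit needed. This is cleaner and more constructive in the finite-action setting of the paper, whereas the paper's bound-and-limit argument has the advantage of extending verbatim to continuous or countably infinite action spaces where no positive minimum weight $w_{\min}(x)$ need exist. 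Both proofs implicitly rely on the same support condition (that $\pi_t(a'\mid x)>0$ whenever $\pi_p(a'\mid x)>0$, so that $\sum_{a':\pi_t>0}\pi_p(a'\mid x)=1$), which in the paper manifests as $\lP_{\pi_p}(W>0\mid x)=1$; you are right that this is already baked into the importance-sampling framework used throughout.
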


\begin{proof}
For any $\tilde{c}$, $\lE_{\pi_t}\left[ \frac{\overline{W}}{W}  \middle| x \right] = \lE_{\pi_p}\left[\min(W, \tilde{c})\middle| x\right]$, thus
\begin{align*} 
\tilde{w}_{\tilde{c}}(a,x) \leq \frac{\tilde{c}}{\lE_{\pi_p}\left[\min(W, \tilde{c}) \middle| x \right]} \leq \frac{1}{\lP_{\pi_p}(W \geq \tilde{c} | x)} \xrightarrow{\tilde{c} \rightarrow 0} 1 
\end{align*}
as $\lE_{\pi_p}\left[\min(W,\tilde{c}) \middle| x \right] = \tilde{c} \lP_{\pi_p}(W \geq \tilde{c}|x) + \lE_{\pi_p}\left[W[W <\tilde{c}] \middle| x \right] $.
\end{proof}

Unfortunately, we cannot ensure such property for zero-capping, which prevents us from adapting $\tilde{c}$ for PointNCIS.

\begin{lemma}[zero-capping]
There exists $a$, $c > 1$ such that for any $\tilde{c} > 0$, $ \tilde{w}_{\tilde{c}}(a,x) > c$.
\end{lemma}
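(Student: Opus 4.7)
The plan is to exhibit a concrete context $x$, action $a$, and constant $c > 1$ witnessing the claim, i.e.\ a setting in which the zero-capped effective weight $\tilde w_{\tilde c}(a,x) = \overline w(a,x) / \lE_{\pi_t}[\overline W / W \mid X = x]$, with $\overline w(a,x) = w(a,x) \mathbf{1}_{w(a,x) < \tilde c}$, exceeds $c$ for every $\tilde c > 0$. Symmetrically to the max-capping lemma, I would pick $a$ so that $w(a,x) > c$ and then control the denominator across all regimes of $\tilde c$, arguing that it never grows enough relative to the numerator to pull the ratio down to $c$.

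Concretely, I would work with a two-action context $\{a_1, a_2\}$ where $\pi_p(a_1 \mid x) = \alpha$, $\pi_p(a_2 \mid x) = 1-\alpha$, and $\pi_t$ is a Dirac on $a_1$, with $\alpha < 1/c$. Then $w(a_1,x) = 1/\alpha > c$ and $w(a_2,x) = 0$. Taking $a = a_1$, the denominator simplifies to $\lE_{\pi_t}[\overline W / W \mid X = x] = \mathbf{1}_{1/\alpha < \tilde c}$ (the $a_2$ term drops since $\pi_t(a_2 \mid x) = 0$), and two regimes appear. For $\tilde c > 1/\alpha$ both numerator and denominator are positive and the ratio equals $1/\alpha > c$. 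For $\tilde c \le 1/\alpha$ both vanish and the ratio is the indeterminate form $0/0$.

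The main obstacle is this second regime: a strict algebraic reading would return $0$ rather than a value exceeding $c$. My plan to dispose of it is operational. When $\lP_{\pi_t}(W < \tilde c \mid X = x) = 0$, the Midzuno--Sen rejection step inside $\hat{IP}_c(x)$ never accepts, so no finite estimate of $1 / \lE_{\pi_t}[\overline W / W \mid X = x]$ can be produced and the PointNCIS effective weight applied to any sample of $a_1$ is unbounded in magnitude. Under the natural convention that an unbounded effective weight violates any finite cap---consistently with how the max-capping lemma delivers a finite limit of $1$---we get $\tilde w_{\tilde c}(a_1, x) > c$ in this regime as well, and the claim holds as stated.

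The hardest step is justifying this convention rigorously. As a fallback I would perturb the construction by spreading a vanishing $\pi_t$-mass $\eta$ onto a third action $a_3$ with $w(a_3, x) = 1/\alpha$; then the denominator for $\tilde c \in (0, 1/\alpha)$ becomes $\eta > 0$ and the ratio-blowup regime shrinks to the single point $\tilde c = 1/\alpha$, so that for any $\tilde c$ outside a set of measure zero the ratio is either $(1-\eta)/\alpha > c$ or still indeterminate in a controlled way. Matching $\eta$ to $\alpha$ quantitatively to push the ambiguous set to zero, and reconciling the algebraic $0/0$ with the operational unboundedness, are the two essentially equivalent fronts on which the proof stands or falls; my proposal commits to the operational reading, since it is the one that is directly relevant to the paper's use of the lemma (ruling out an adaptive choice of $\tilde c$ for PointNCIS under zero-capping).
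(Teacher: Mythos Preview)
Your proposal has a genuine gap, and it stems from taking the quantifier order in the statement literally. You try to exhibit a \emph{single} action $a$ such that $\tilde w_{\tilde c}(a,x)>c$ for every $\tilde c>0$. With zero-capping this is essentially impossible: once $\tilde c \le w(a,x)$ the numerator $\bar w(a,x)=w(a,x)\mathbf{1}_{w(a,x)<\tilde c}$ is exactly $0$, so $\tilde w_{\tilde c}(a,x)$ is either $0$ (if the denominator is positive) or $0/0$. Your ``operational'' reading---declaring that a non-terminating Midzuno--Sen loop counts as $\tilde w>c$---is not a proof of the mathematical inequality, and your fallback construction actually makes things worse: once you put $\pi_t$-mass $\eta$ on a third action $a_3$ with small weight, the denominator becomes $\eta>0$ for the relevant range of $\tilde c$, while the numerator $\bar w(a_1,x)$ stays $0$, so $\tilde w_{\tilde c}(a_1,x)=0<c$, which is the opposite of what you want. (Your stated choice $w(a_3,x)=1/\alpha$ does not even make the denominator positive, since $a_3$ is then capped exactly when $a_1$ is.)

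The paper's proof avoids all of this by proving the statement that is actually needed to block an adaptive choice of $\tilde c$: for some $c>1$ and some context $x$, \emph{for every} $\tilde c>0$ there exists an action with $\tilde w_{\tilde c}(a,x)>c$---i.e.\ the $a$ is allowed to depend on $\tilde c$. Concretely, it takes two actions with $\pi_p(a_0)=p$, $\pi_t(a_0)=1-p$ (so both policies put strictly positive mass on both actions), giving $w(a_0)=(1-p)/p<1<w(a_1)=p/(1-p)$. When $\tilde c$ is large enough that nothing is capped, $a_1$ keeps its raw weight $p/(1-p)>c$; when $\tilde c$ falls in $[w(a_0),w(a_1))$, the normalization blows up $a_0$'s effective weight above $c$; and when $\tilde c<w(a_0)$ the denominator vanishes and $\tilde w$ is undefined. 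The crucial idea you are missing is that the offending action \emph{switches} across the regimes of $\tilde c$, which is precisely why no single $\tilde c$ can cap all effective weights.
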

\begin{proof}
We use a counter-example where two actions $a_0$ and $a_1$ are taken with probabilities $\pi_p(a_0) = p$ and $\pi_t(a_0) = 1 - \pi_p(a_0)$ where $p>0.5$. Then, $w(a_0) = \frac{1-p}{p} < w(a_1) =  \frac{p}{1-p}$.
\begin{description}
\item{Case $\tilde{c} < w(a_0)$:} $\tilde{w}_{\tilde{c}}(a,x)$ is undefined.
\item{Case $\tilde{c} \in \left[w(a_0), w(a_1)\right]$:} $\tilde{w}_{\tilde{c}}(a_1,x) = 0$ and $\tilde{w}_{\tilde{c}}(a_0,x) = \frac{1-p}{p^2}$.
\item{Case $\tilde{c}\geq w(a_1)$:} $\tilde{w}_{\tilde{c}}(a_1,x) = \frac{p}{1-p} > 1$ and $\tilde{w}_{\tilde{c}}(a_0,x) = \frac{1-p}{p}$.
\end{description}
So, if $0.5 < p < \frac{-1 + \sqrt{5}}{2}$ and $1 < c < \min\left(\frac{p}{1-p}, \frac{1-p}{p^2}\right)$ then there exists an action in each case such that $\tilde{w}_{\tilde{c}} > c$.
\end{proof}

\subsection{Additional figures}
\begin{figure}[h]
        \includegraphics[width=0.23\textwidth]{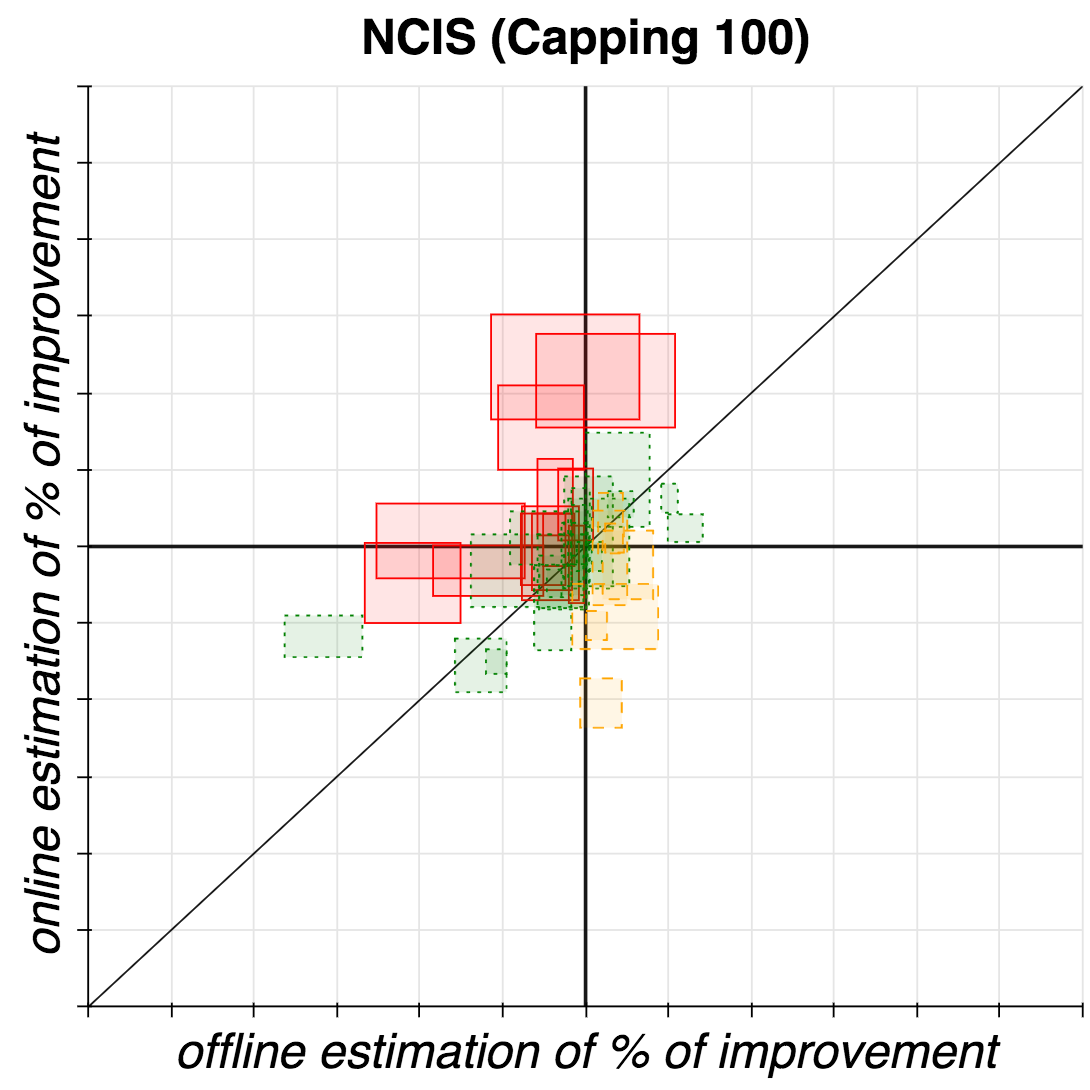}
        \includegraphics[width=0.23\textwidth]{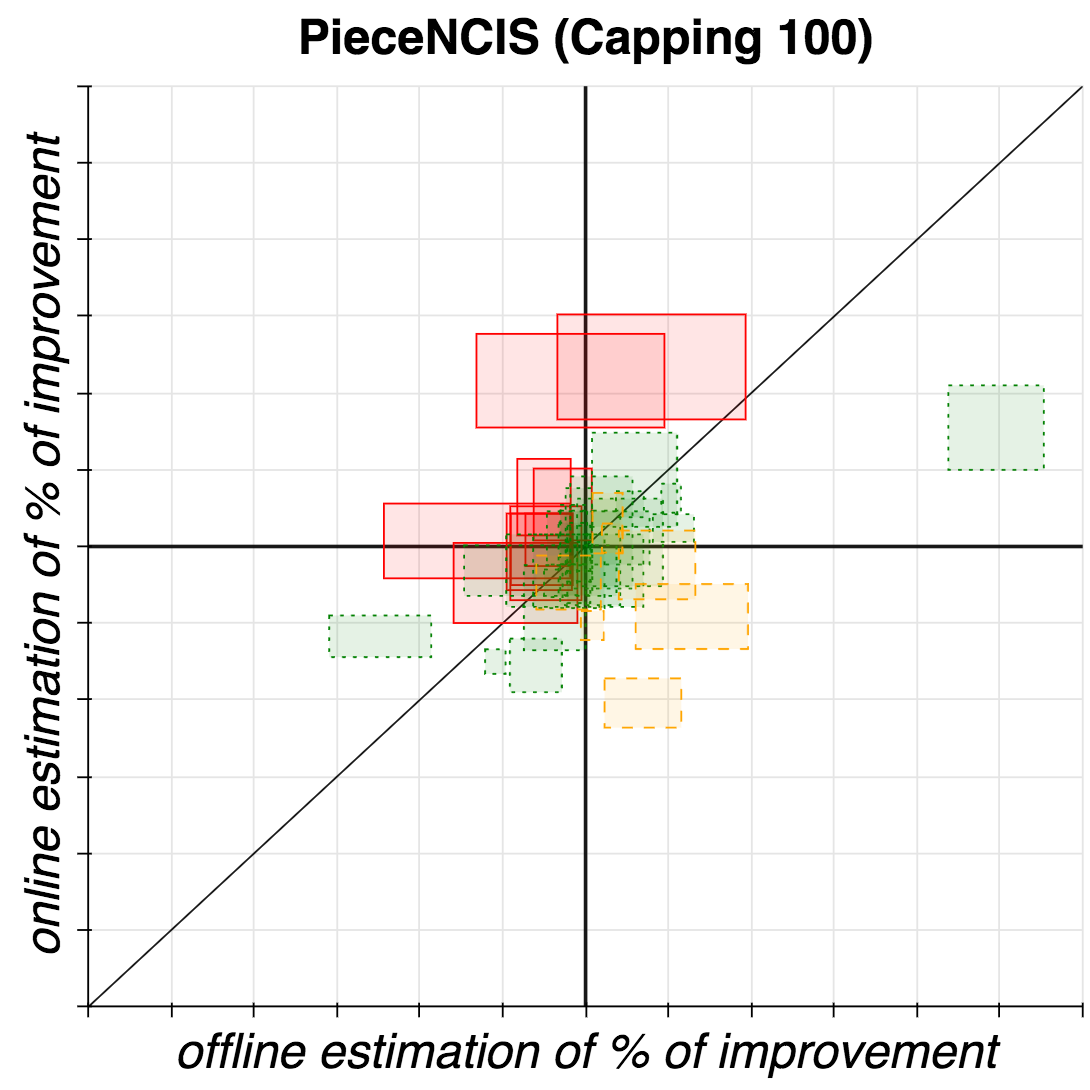}
    \caption{Comparison of online/offline decision. A box is an A/B test. The width (resp. height) of a box is a $90\%$ confidence bound on the offline (resp. online) uplift. The scale is the same for both axis. Green/dotted: right decision. Orange/dashed: false positive. Red/plain: false negative.}
\end{figure}
\newpage
\bibliographystyle{ACM-Reference-Format}
\balance
\bibliography{literature} 

\end{document}